\title{Collaborative Development of NLP models}
\author{%
Fereshte Khani \\
  Microsoft\\
  \texttt{fkhani@microsoft.com} \\
   \And
   Marco Tulio Ribeiro \\ 
   Microsoft \\
   \texttt{marco.correia@microsoft.com} \\
}
\newcommand\sG{\ensuremath{\mathcal{G}}}
\newcommand\sP{\ensuremath{\mathcal{P}}}
\newcommand\BR{\ensuremath{\mathbb{R}}}
\newcommand{\1}{\mathbb{I}} 
\newcommand\refsec[1]{Section~\ref{sec:#1}}
\newcommand\reffig[1]{Figure~\ref{fig:#1}}
\newcommand\reftab[1]{Table~\ref{tab:#1}}
\newcommand\refprop[1]{Proposition~\ref{prop:#1}}
\newcommand{\E}{\ensuremath{\mathbb{E}}} 
\newcommand\pl[1]{}
\definecolor{Green}{rgb}{1, 1, 1} 
\definecolor{Red}{rgb}{1, 1, 1} 
\definecolor{Green1}{rgb}{0.9, 0.9, 1} 
\definecolor{Red1}{rgb}{0.9, 0.9, 1} 
\definecolor{Green2}{rgb}{0.95, 0.95, 1} 
\definecolor{Red2}{rgb}{0.95, 0.95, 1} 
\newcommand{\method}{CoDev}
\newcommand{\specialcell}[2][l]{%
  \begin{tabular}[#1]{@{}l@{}}#2\end{tabular}}
\begin{document}

\maketitle

\begin{abstract}
Despite substantial advancements, Natural Language Processing (NLP) models often require post-training adjustments to enforce business rules, rectify undesired behavior, and align with user values.  
These adjustments involve operationalizing "concepts"—dictating desired model responses to certain inputs. 
However, it's difficult for a single entity to enumerate and define all possible concepts, indicating a need for a multi-user, collaborative model alignment framework. 
Moreover, the exhaustive delineation of a concept is challenging, and an improper approach can create shortcuts or interfere with original data or other concepts.

To address these challenges, we introduce CoDev, a framework that enables multi-user interaction with the model, thereby mitigating individual limitations. 
CoDev aids users in operationalizing their concepts using Large Language Models, and relying on the principle that NLP models exhibit simpler behaviors in local regions. 
Our main insight is learning a \emph{local} model for each concept, and a \emph{global} model to integrate the original data with all concepts.
We then steer a large language model to generate instances within concept boundaries where local and global disagree.
Our experiments show CoDev is effective at helping multiple users operationalize concepts and avoid interference for a variety of scenarios, tasks, and models.
\end{abstract}

\section{Introduction}
NLP models have showcased remarkable capabilities, yet they are not exempt from imperfections. Unacceptable values embedded in their training data, persistent errors, or violations of business rules highlight the need to teach \emph{concepts} to these models.
A concept relates a set of inputs to desired behaviors, e.g. ``religion does not connote sentiment''.
Similarly, in the context of natural language inference (NLI), the broader concept of ``downward monotonicity'' specifies entailment relations when certain parts of expressions are made more specific (e.g., ``All cats like tuna'' entails ``All small cats like tuna'').

The standard way of teaching concepts to models is adding new training data that exemplifies the concept, e.g. adding neutral sentences containing religious words \citep{dixon2018measuring}, or adding entailment pairs that illustrate downward monotonicity \citep{yanaka2020neural}. 
However, it is hard to ensure that the data provided does not to lead to \emph{shortcuts},  i.e. spurious correlations or heuristics that allow models to make predictions without capturing the true underlying concept, such as ``all sentences with religious terms are neutral'', or ``going from general to specific leads to entailment''.
Further, the model may \emph{overfit} -- fail to generalize from the instances provided to the actual concept, e.g. only recognizing pairs in the form (``all X...'', ``all ADJECTIVE X...''), and not pairs like (``all animals eat'', ``all cats eat'').
Finally, both shortcuts and overfitting can lead to \emph{interference} with the original data or other concepts, e.g. leading to failures on ``I love Islam'' or pairs like (``Some cats like tuna'', ``Some small cats like tuna'').
In sum, operationalizing concepts is challenging, since users typically cannot think of all concept boundaries and interactions in advance.

One possible solution is to ask domain experts to create data that covers the concept as comprehensively and precisely as possible, e.g. the GLUE diagnostics dataset \citep{glue} or the FraCaS test suite \citep{fracas}.
However, these datasets are often expensive to create, small (and thus not suitable for training), and not exhaustive, as even experts still fail to think about all aspects and nuances of a concept \citep{ribeiro2022adaptive}.
Another solution is to have users provide data incrementally while they receive feedback from the model, e.g. adversarial training \citep{dynabench} or adaptive testing \citep{ribeiro2022adaptive}.
These do not require users to think about everything in advance, and can expose and correct model weaknesses.
However, adversarial training does not include the notion of concepts explicitly, and adaptive testing does not address the interaction between different concepts or between a concept and the original data.
As a result, users may not explore concept \emph{boundaries} efficiently, and thus do not know when they have sufficiently covered a concept or whether they have introduced interference with other concepts.

In this paper, we introduce Collaborative Development of NLP Models (CoDev). CoDev leverages the collective knowledge of multiple users to cover many concepts instead of relying on a single user. In addition to global model that integrates the original data and all additional concepts, we rely on the principle that models exhibit simpler behaviors in local regions and train a \emph{local model} for each concept.
We then guide the LLM to generate examples where the local and global model disagree. Intuitively, these examples are either cases where the local model is not yet fully specified, or where the global model still makes errors on the concept (due to overfitting or shortcut reliance). As users label these examples, both models are updated until convergence, i.e. until the concept has been learned in a way that does not conflict with prior data or prior concepts (\reffig{overview}). Each local model is ever-improving cheap expert in its respective concept. The speed of prediction of local models and diversity of examples generated by the LLM enable users to explore the boundaries between concepts and existing data, an exploration that could be challenging for users to do without aid.

\begin{figure*}
    \centering
    \includegraphics[width=\textwidth]{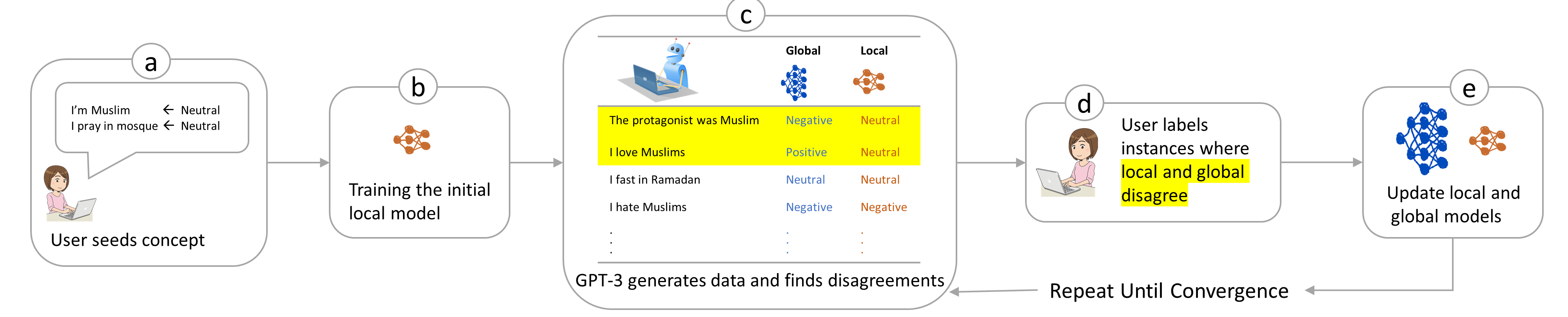}
    \label{fig:overview}
    \caption{\label{fig:overview}
\method{} loop for operationalizing a single concept. (a) The user starts by providing some seed data from the concept and their labels, (b) they are used to learn a local concept model. (c) GPT-3 is then prompted to generate new examples, prioritizing examples where the local model disagrees with the global model. (d) Actual disagreements are shown to the user for labeling, and (e) each label improves either the local or the global model. The loop c-d-e is repeated until convergence, i.e. until the user has operationalized the concept and the global model has learned it. 
}
\end{figure*}

Our experimental results demonstrate the effectiveness of \method{} in operationalizing concepts and handling interference. 
We first show \method{} outperforms AdaTest \citep{ribeiro2022adaptive}, a SOTA tool for debugging NLP models that also uses GPT-3 by revealing more bugs and fixing them without interference.
We then demonstrate that \method{} operatinalize concepts even when user starts with biased data, outperforming a model that relies solely on data collection. 
We compare the data selection mechanism of \method{} to random selection and uncertainty sampling by running simplified version of CoDev
where instead of using GPT-3 we iteratively select examples from an unlabeled pool.
We show \method{} outperforms both baselines in teaching an NLI model about downward- and upward-monotone concepts \citep{glue}, as well as teaching a sentiment analysis model about Amazon products reviews. 
Finally, in a pilot study, we demonstrated that \method{} helped users clarify their concepts.

\section{Setup} 

Let $x$ be a text string, and $y$ a categorical label, e.g. sentiment (positive, negative, neutral).
We assume there is a true function $f(x) = y$ that we aim to approximate with a model $\hat{f}(x)$. 
Assume we have access to a ``base'' dataset $D_0=\{(x_1,y_1), …, (x_n,y_n)\}$, e.g. of movie reviews, from base distribution $P_0$.
We refer to the model trained on $D_0$ as the base model $\hat{f}_0$.

A concept $C_i$ is associated with a distribution $P_i$ over the input space, e.g. a concept might place probability mass exclusively on sentences containing religious words.
We say $x \in C_i$ if $P_i(x) > 0$.
Since it's hard for users to be exhaustive, we do not assume users can generate from $P_i$, but that they can label any $x$ with $f(x)$, and as to whether it is in $C_i$ or not. We further assume users can provide a very small number samples in the support of $P_i$.

Without loss of generality, we assume that we have $k$ users developing a model collaboratively, each with their own concept.
Our goal is to train a model $\hat{f}$ that does well on both the base distribution and all concepts, i.e., minimizing $\frac{1}{k+1}\sum_{i=0}^{k} \E_{x \sim P_i}[\hat{f}(x) \neq f(x)]$.

\section{Collaborative Development of NLP models}

\begin{figure*}[ht]
     \centering
          \begin{subfigure}[b]{0.23\textwidth}
         \centering
         \includegraphics[width=0.9\textwidth]{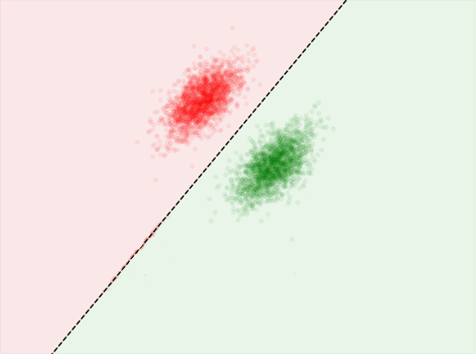}
         \caption{The initial data ($P_0$) is elliptical Gaussian}
         \label{fig:y equals x}
     \end{subfigure}
     \hfill
     \begin{subfigure}[b]{0.23\textwidth}
         \centering
         \includegraphics[width=0.9\textwidth]{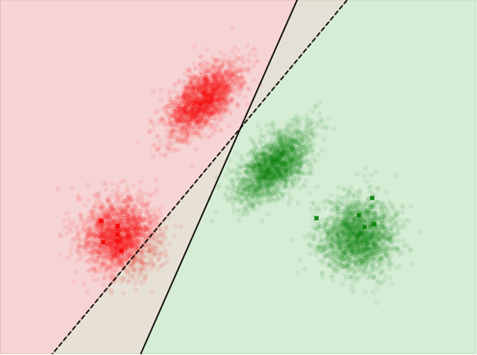}
         \caption{The user concept ($P_i$) is spherical Gaussian}
         \label{fig:y equals x}
     \end{subfigure}
     \hfill
     \begin{subfigure}[b]{0.23\textwidth}
         \centering
         \includegraphics[width=0.9\textwidth]{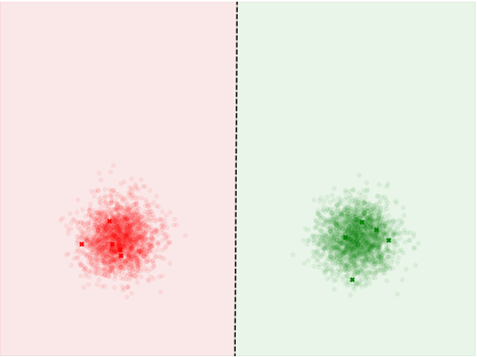}
         \caption{local function for the user data}
         \label{fig:three sin x}
     \end{subfigure}
     \hfill
     \begin{subfigure}[b]{0.27\textwidth}
         \centering
         \includegraphics[width=0.78\textwidth]{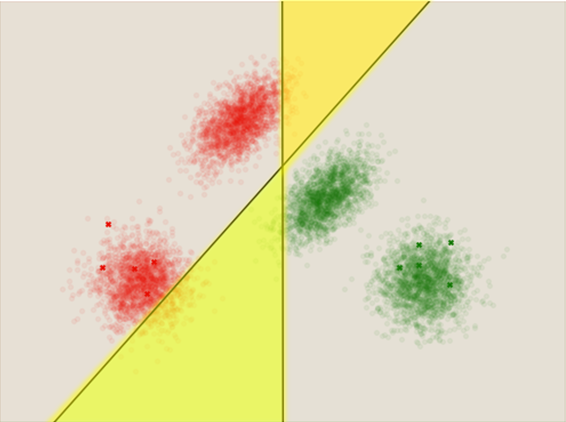}
         \caption{disagreement regions between local and global models}
         \label{fig:five over x}
     \end{subfigure}
        \caption{(a) A model is trained on two elliptical Gaussian (b) The spherical Gaussian are showing a user concept, who wants to find bugs in the model and teach the model about the new concept. However, since the model (dashed line) have high accuracy on the user concept it is hard to find bugs and teaching the model requires data from the low probability region. (c) We fit a classifier to the Spherical Gaussian which can be done with only a few data points, (d) we then focus on disagreements between these two models to find bugs in the user concept.}
        \label{fig:interference_linear}
\end{figure*}

In this section we describe how a single user operationalizes their concept (i.e. produces a dataset $D_i$) in the context of an existing global model $\hat{f}$ trained on the base dataset $D_0$ and previous concepts $D_{1:i-1}$.
%
If $\hat{f}_0$ could already ``solve'' the concept, i.e. $\hat{f_0}(x) = f(x)$ for all $x \in C_i$, we would be done and there would be no need for $D_i$. 
Thus, what we really want is for $D_i$ to specify the boundary around ``failures'', cases not currently handled by $\hat{f}_0$.
We abstract away the choice model and learning procedure, assuming that the model can learn the concept given $D_i$ (in practice, we use neural networks that are expressive enough).

\subsection{Sampling from the concept}
Since by assumption we cannot sample from the concept distribution $P_i$, it is a challenge to find regions of $P_i$ where $\hat{f_0}$ fails.
To address this, we use GPT-3 \citep{gpt3} as a generator $\sG$ to simulate a random walk within the concept.
To do so, we construct a prompt with $m$ in-concept examples, and use this prompt as input to $\sG$ to generate more samples.
Then, we ask the user in the loop to accept or reject each generated sample $x'$ ($x'$ is accepted if $x' \in C_i$), and also to label the accepted $x'$ with $f(x')$. 
The value of $m$ controls the tradeoff between precision and recall, with high $m$ generating more in-concept examples and low $m$ exploring the space more broadly.

Under some conditions it can be shown that $\sG$ simulates a Markov chain with stationary distribution $P_i$ (Appendix \ref{sec:appendix}), but the weaker condition of connectivity suffices for finding the concept failures, i.e. there must be a path between any $x', x'' \in C_i$ with nonzero transition probabilities according to $\sG$ and the prompt. 
That is, if the concept is connected, with enough time we should be able to find regions of $P_i$ that are not already learned by $\hat{f}_0$.

While sampling from $\sG$ \emph{eventually} leads to the yet-unlearned concept regions, it is an inefficient use of human effort, as it does not use the user labels to guide generation (i.e. the user has to label many examples that the current model already handles correctly).
A better approach would be ask the user to label in a way that maximizes the expected information gain for the concept, to which we now turn.

\subsection{Local Concept Models}
\label{sec:local_model}
Complex functions can be approximated by simpler functions in a local neighborhood, as evidenced by theoretical results (e.g. Taylor expansion) and empirical applications \citep{ribeiro2016lime, lundberg2017unified}.
Since a concept is a natural local neighborhood, we use this insight and learn a \emph{local} model $\hat{f}_i$ to approximate $f(x)$ in $C_i$.

We present a toy example for intuition in Figure \ref{fig:interference_linear}, where we show toy distributions $P_0$ (Figure \ref{fig:interference_linear}a) and $P_0$ with an additional concept $P_1$ (Figure \ref{fig:interference_linear}b).
In \ref{fig:interference_linear}b, $\hat{f_0}$  learned on samples from $P_0$ (dashed line) predicts most of $P_1$ correctly, except for a small region in the bottom left quadrant.
However, we would need \emph{many} random samples from $P_1$ in order to learn that region, and reach the best model $\hat{f}$ (solid line in Figure \ref{fig:interference_linear}b).
In contrast, we can learn a good local model $\hat{f_1}$ for $P_1$ with a trivial number of samples (Figure \ref{fig:interference_linear}c).
This local model can be used to produce a \emph{disagreement region} between $\hat{f_1}$ and $\hat{f_0}$ (Figure \ref{fig:interference_linear}d), and sampling from that region would lead to failure discovery much faster.

More generally, we define a score function as the disagreement between the local and global function.
This score function is used to steer generation such as to maximize the score of generated samples $x'$, by adding instances to the prompt for $\sG$ with probability proportional to their score (similar to \citet{ribeiro2022adaptive}, who use a different score function).
We note that models may present false agreement on some samples, i.e. $\hat{f_i}(x') = \hat{f_0}(x') \neq f(x')$. To account for this, we also sample from the agreement region sometimes, with a probability that decays over time as we gain more confidence in the local model.

\begin{figure*}[ht]
     \centering
     \begin{subfigure}[b]{0.19\textwidth}
         \centering
         \includegraphics[width=\textwidth]{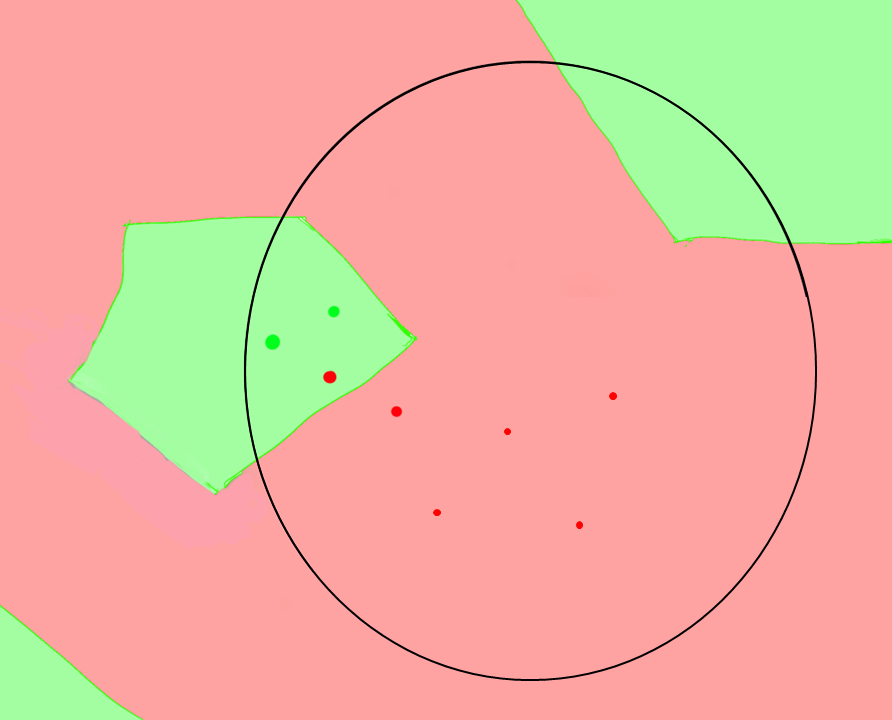}
         \caption{seed}
         \label{fig:y equals x}
     \end{subfigure}
     \hfill
     \begin{subfigure}[b]{0.19\textwidth}
         \centering
         \includegraphics[width=\textwidth]{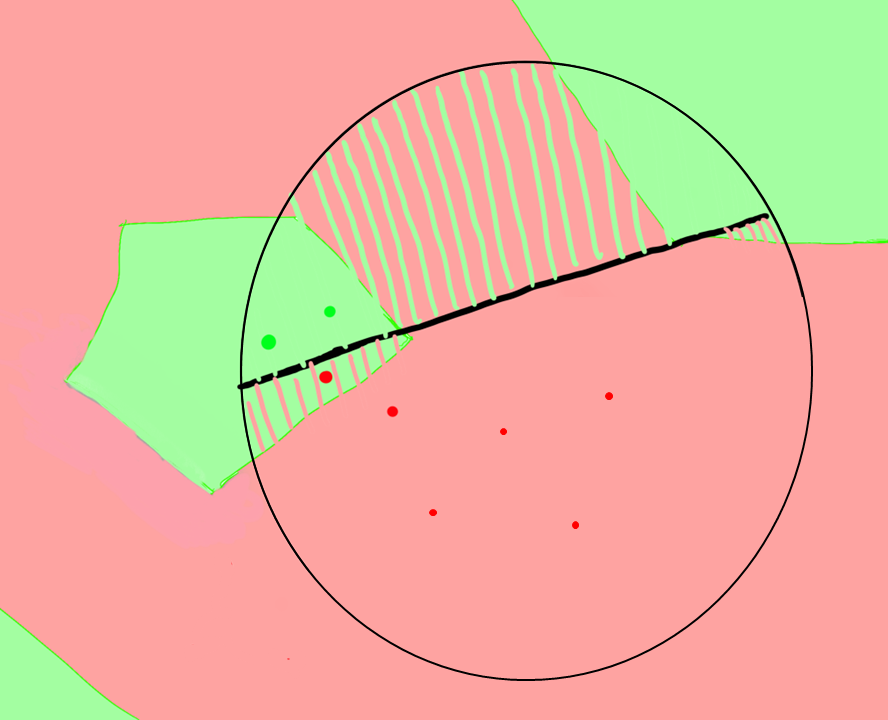}
         \caption{local fit}
         \label{fig:three sin x}
     \end{subfigure}
     \hfill
     \begin{subfigure}[b]{0.19\textwidth}
         \centering
         \includegraphics[width=\textwidth]{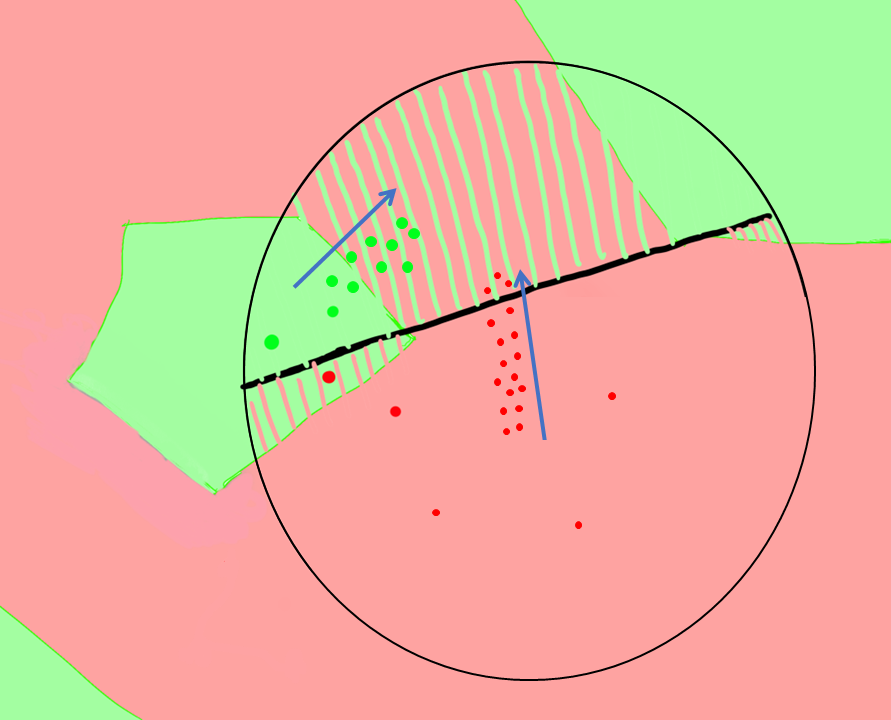}
         \caption{generation}
         \label{fig:five over x}
     \end{subfigure}
          \begin{subfigure}[b]{0.19\textwidth}
         \centering
         \includegraphics[width=\textwidth]{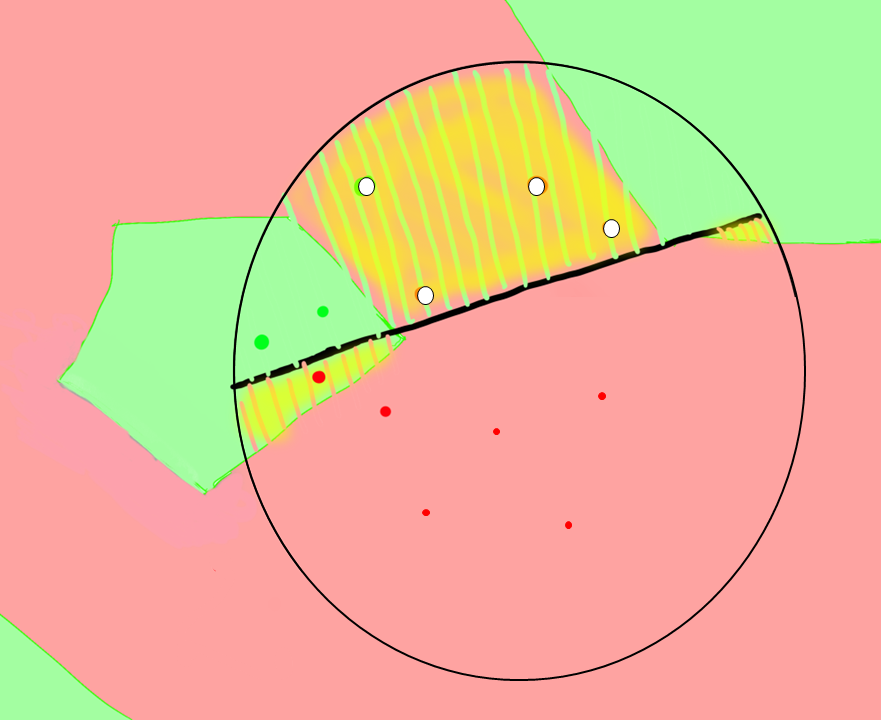}
         \caption{selection}
         \label{fig:five over x}
     \end{subfigure}
               \begin{subfigure}[b]{0.19\textwidth}
         \centering
         \includegraphics[width=\textwidth]{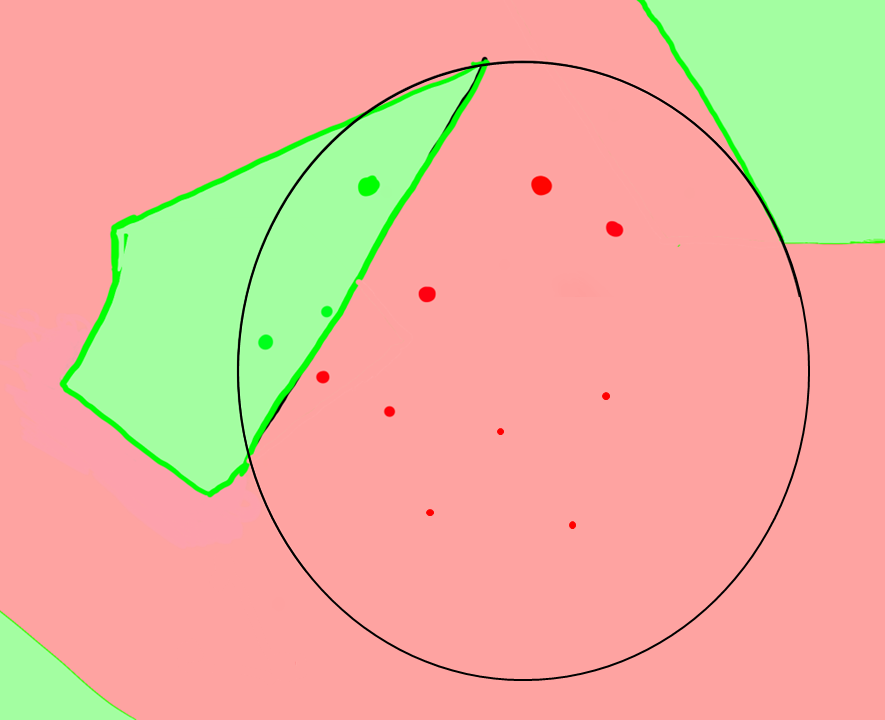}
         \caption{update}
         \label{fig:five over x}
     \end{subfigure}
\caption{This figure illustrates the main steps in \method{}: (a) The user starts by providing a small number of data points and their labels within their domain (represented by the black circle), (b) we fit a simple model (shown as a linear model) to represent the user's concept, (c) we use the generator ($\sG$) to generate data points towards the region where the local model disagrees with the global model, (d) a diverse set of data points are selected for the user to label, (e) based on the user's feedback, the local and global models are updated until convergence.}
        \label{fig:debugging}
\end{figure*}



\subsection{Operationalizing a concept: from disagreement to convergence}
The local and global models disagree on regions where the concept has not yet been learned (global is wrong) or the local model does not fit the user's concept correctly (local is wrong). Thus, every label from the disagreement region results in an improvement in whichever model is incorrect.
As labeling progresses, we update both the concept model $\hat{f_i}$ and the global model $\hat{f}$ until we reach convergence (i.e. $\hat{f}$ has learned the concept $C_i$).
Note that while $\hat{f_i}$ is trained on data from $P_i$,  $\hat{f}$ is trained to do well on the base distribution $P_0$, as well as on all concepts $P_{1:k}$.

We present pseudo-code for operationalizing a concept in Algorithm \ref{alg:concept}, and illustrate it in Figure \ref{fig:debugging}. The local and global models are initialized in lines 1-2 (Figure \ref{fig:debugging}a and b).
In line 5 (Figure \ref{fig:debugging}c), $\sG$ is prompted such as to generate in-concept samples that are likely to be in the disagreement region. Then, in line 6 (Figure \ref{fig:debugging}d), we select a batch of $b$ examples for labeling according to the disagreement score of the generated instances. This generation-labeling loop is repeated for $L$ iterations, after which both the local and global models get updated in lines 7-8 (Figure \ref{fig:debugging}e).
For the sake of interactivity, in practice we do not train from scratch when updating the models, and instead finetune them from a previous checkpoint.
After a few iterations, $\hat{f}$ and $\hat{f_i}$ converge (i.e. the user has operationalized the concept and the global model has learned it).
If the number of generated samples $b * L$ is large enough, we can assume that no disagreement on generated sample between $\hat{f}$ and $\hat{f_i}$ means they have converged, and thus we stop and output the generated data $D_i$ (line 9).

\begin{algorithm}
\SetKwRepeat{Do}{do}{while}%
\SetKwInOut{Input}{input}  
\SetKwInOut{Output}{output}
 \Input{Base dataset $D_0$,\ 
  Concept datasets $D_{1:i-1}$, \ 
 A small set of samples from concept $D_i$}
{\bf Init local and global:} Train $\hat{f_i}$ on $D_i$, and train $\hat{f}$ on $D_{0:i-1}$\;
 \Do{$D_i$ was updated this round}{
 
    \For{$L$ iterations} {
    {\bf Generation:} Prompt $\sG$ with subset from $D_i$ chosen with probability $\propto |\hat{f}(x) - \hat{f_i}(x)|$  \;
  {\bf Labeling:} Select $b$ samples with prob. $\propto |\hat{f}(x') - \hat{f_i}(x')|$. Users reject $x'$ if out of concept, or add to $D_i$ with label $f(x')$\;
  }
  {\bf Update local and global:} Train $\hat{f_i}$ on $D_i$, and train $\hat{f}$ on $D_{0:i}$\;
 }
 \Output{A dataset $D_i$}
 \caption{Operationalizing a new concept $i$}
 \label{alg:concept}
\end{algorithm}

 

\subsection{Handling interference between concepts}
\label{sec:interference}
Any updates made to one region of the global model can have an impact on other regions (for intuition compare the first and last figure in \ref{fig:debugging} which shows that the change in the local region had lead to other changes outside the concept that can belong to some other concepts thus cause interference).
Having local functions (cheap experts) enable us to check interference efficiently.
Every time that a user operationalizes a concept according to Algorithm \ref{alg:concept}, we check the resulting global model $\hat f$ against the local concepts for all previous concepts.
To do so, we re-run Algorithm \ref{alg:concept}, only asking the user for labels if there is a newfound disagreement region.
In practice, this means that a user adding a new concept needs to make sure it does not break any concepts from other users, a process similar to regression testing in software engineering. 


While we want to handle interference, having a multiplicity of concepts is beneficial in exposing false agreement regions between the global model and any individual concept model.
In other words, while both $\hat{f}$ and $\hat{f_i}$ may be relying on the same shortcut to solve some regions of $P_i$, it is unlikely that this shortcut does not cause disagreement between $\hat{f}$ and all local models $\hat{f_j}$ for $j \ne i$. In this case, the interference caused by adding $C_j$ is actually beneficial, as it further refines concept $C_i$.

In practice the original dataset ($D_0$) is often very large and we cannot fine-tune the model on $D_{0:k}$. Instead of choosing the whole $D_0$, every time we sample data points with highest disagreement between $\hat f_0$ and $\hat f$ from $D_0$. 
In other word, we treat $\hat f_0$ as a user with concept distribution $P_0$, this enable us to deal with interference with original model as well (as an example in Figure~\ref{fig:interference_linear} we might only choose the data points from elliptical Gaussian in disagreement region).

\section{Linear regression analysis}
We now study \method{} in a noiseless overparametrized linear regression scenario. 
This particular setup is used in recent literature to gain some insights into the behaviors of deep networks \citep{khani2021removing, raghunathan2019hurt, nakkiran2019more,raghunathan2020understanding}. For more details and proofs see Appendix~\ref{sec:appendix_lr}.


{\bf Setup.} Let $x \in \BR^d$ where only a subset the data points are valid. 
We assume $\theta^\star \in \BR^d$ such that $y={\theta^\star}^\top x$.
Let $S_i$ denote the smallest subspace containing all valid data points of $C_i$.
Given $k$ examples in $C_i$, let $S_i^{obv}$ denote the subspace observed by the training data, $S_i^{uno}$ denote the unobserved subspace (thus $S_i = S_i^{obv} + S_i^{uno}$), and $S_i^{inv}$ is the subspace that concept $i$ does not have any variation in it. 
We consider overparametrized noiseless linear regression, where the number of features ($d$) exceeds the number of training data, enabling us to always interpolate all training data.
We assume local and global models infer the min $\text{L}_2$ norm interpolant.

As a running example, let $x \in \BR^3$ and consider a concept where data points belonging to that concept satisfies $x_1 = x_2$.
Let's assume we observed $x = [1,1,0]$ with label $y=2$.
In this case we have: 
$S_1^{obv} = \{[1, 1,0]\}$, 
$S_1^{uno} = \{[0,0,1]\}$ and 
$S_1^{inv} = \{[1, -1,0]\}$. 
The min-norm solution interpolating the concept is $\hat \theta_i = [1,1,0]$.

\paragraph{Operationalizing a concept.}
The min-norm solution can also be construed by introducing constraints that projection of $\hat \theta_i$ on $S^{uno}_i$ and $S^{inv}_i$ has to be zero.
In the above example, the min-norm solution is the unique answer of the following linear equations ($[0,0,1]\theta = 0, [1, -1,0]\theta = 0, [1, 1,0]\theta = 2$). When we teach the global model about a concept, the naive combination of data points could violate these constraints, leading to disagreements between local and global models. 
For our running example, suppose the original data point is $x=[0,1,1], y=2$. 
Combining the concept data point with the original data point for inferring global model results in $\hat \theta_\text{global} = [\frac{2}{3},\frac{4}{3},\frac{2}{3}]$, which leads to a disagreement between local and global model for data points varying in the $[0,0,1]$ direction.

In case of disagreement, two scenarios could arise: (1) the direction is non-zero and the local model needs more specification (happens a lot at the beginning), or (2) the direction is zero, but an explicit constraint is needed to prevent the global model from assuming other values.
In our running example, the generator can help us to find a data point that the two model disagree \footnote{Recall that only some of the data points are valid thus we need the generator to find a data points with variation in $[0,0,1]$ direction}. 
Let's assume the data point is $x=[0,0,1]$ where local model predicts $0$ but global model predicts $\frac{2}{3}$. 
We show this data point to the user and either the user confirms $[0,0,1]$ as non-zero or makes the zero explicit, resulting in a global prediction of $\hat \theta_\text{global} = [0,2,0]$.

Once we learn the local concept (i.e., all unobserved directions are indeed zero), how many of $S_i^{uno}$ directions need to be added as explicit constraints? Intuitively, we do not need to query user for any direction in $S_i^{uno}$ that has already been observed in $S_0^{obv}$ or are orthogonal $S_0^{obv}$ since there is no interference. 
The following proposition indicates the maximum number of disagreements for teaching global model about the local concept.

 \begin{restatable}{proposition}{Firstprop}
 If $\operatorname{proj}_{S_i^{uno}}(\theta^\star)=0$, then the maximum number of disagreement between local and global models is $\dim (\operatorname{proj}_{S_0^{obv}}(S_i^{uno} \cap (S_i^{uno} \cap S_0^{obv})^\perp))$.
\label{prop:disagreements}
\end{restatable}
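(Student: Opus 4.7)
The approach is to compute both models' predictions on $S_i^{uno}$ via the min-norm formula, localize possible disagreements to a specific subspace of $S_i^{uno}$, and match its dimension to the stated formula. Two ingredients do the work: (i) $\hat\theta_i \in S_i^{obv}$ since it is a min-norm interpolant on concept training data, and (ii) $\hat\theta_\text{global} = \operatorname{proj}_V \theta^\star$ with $V = S_0^{obv} + S_i^{obv}$, obtained by observing that any interpolant differs from $\hat\theta_\text{global}$ by a vector in $V^\perp$.

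For the local model, $S_i^{obv} \perp S_i^{uno}$ immediately yields $\hat\theta_i^\top x = 0$ for every $x \in S_i^{uno}$, which matches ${\theta^\star}^\top x = 0$ under the hypothesis. For the global model I would carve out two automatic agreement regions: if $x \in S_i^{uno} \cap S_0^{obv}$, then $x \in V$ gives $\hat\theta_\text{global}^\top x = {\theta^\star}^\top x = 0$; if $x \in S_i^{uno} \cap (S_0^{obv})^\perp$, then $x \perp S_0^{obv}$ and $x \perp S_i^{obv}$ force $x \perp V$, hence $\hat\theta_\text{global}^\top x = 0$. Thus disagreements can only survive in the ``leftover'' part of $S_i^{uno}$.

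To count that leftover, set $A = S_i^{uno} \cap S_0^{obv}$, $B = S_i^{uno} \cap (S_0^{obv})^\perp$, and $U = S_i^{uno} \cap A^\perp$, and let $C$ be the orthogonal complement of $B$ inside $U$, so that $S_i^{uno} = A \oplus B \oplus C$. Disagreements are confined to $C$, giving at most $\dim C$ independent disagreement directions. The restriction $\operatorname{proj}_{S_0^{obv}} \colon U \to S_0^{obv}$ annihilates $B$ and is injective on $C$, since its kernel $C \cap (S_0^{obv})^\perp$ sits inside $C \cap B = \{0\}$. Therefore $\dim \operatorname{proj}_{S_0^{obv}}(U) = \dim \operatorname{proj}_{S_0^{obv}}(C) = \dim C$, matching the stated formula.

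I expect the main obstacle to be the iteration argument: each user query moves the labeled direction from $S_i^{uno}$ into $S_i^{obv}$, enlarging $V$ and forcing $\hat\theta_\text{global}$ to be re-fit, so one must verify that the bound applies across the entire loop rather than only at initialization. I would handle this inductively---after labeling some $x \in C$, a short computation shows that the updated decomposition's $C'$ satisfies $\dim C' = \dim C - 1$---so the loop terminates within the stated number of queries, and the bound is saturated whenever $\theta^\star$ is chosen to realize disagreement on a basis of $C$ (the overparameterization $d > n$ supplies enough free coordinates to arrange this).
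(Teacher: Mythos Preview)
Your proposal is correct and follows essentially the same route as the paper: both confine disagreements to the part of $S_i^{uno}$ orthogonal to $S_i^{uno}\cap S_0^{obv}$ that still projects nontrivially onto $S_0^{obv}$, then read off the dimension bound. Your write-up is in fact more careful than the paper's---you make the orthogonal decomposition $A\oplus B\oplus C$ explicit, correctly identify $\hat\theta_\text{global}=\operatorname{proj}_{S_0^{obv}+S_i^{obv}}\theta^\star$ (the paper's proof writes only $\operatorname{proj}_{S_0^{obv}}\theta^\star$), and flag the iteration argument, which the paper does not address.
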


\paragraph{Handling interference between concepts.} 
During the operationalization of concept $i$, we maintain $S_0^{obv}$ unchanged. 
However, in handling interference (\refsec{interference}), we add data to other concepts and the original data, potentially leading to new conflicts with concept $i$.
Therefore, in addition to considering the projection of $S_i^{uno}$ on observed subspaces we need to consider the unobserved subspaces as well.
Similar to above, if an unobserved direction in one concept has been observed in another concept we do not need to query user. With notation of $S_{0:k}^{obv}$ denoting sum of all the $S_i^{obv}$, and $S_{-i}$ denotes sum of all subspaces except $i$, the following proposition bounds number of times users need to add data to their concepts due to interference.

 \begin{restatable}{proposition}{Secondprop}
If for all $i$, $\operatorname{proj}_{S_i^{uno}}(\theta^\star)=0$ then the maximum number of times that we need to handle interference is $\sum_{i=1}^k \dim \left(\operatorname{proj}_{S_{-i}}\left(S_i^{uno} \cap (S_i^{uno} \cap S^{obv}_{0:k})^\perp\right)\right)$.
\end{restatable}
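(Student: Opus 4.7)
The plan is to reduce this proposition to an iterated application of Proposition~1, where for each concept $i$ we treat the union of all other observed subspaces as the effective ``background'' against which $\hat f$ must be reconciled with $\hat f_i$.

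First, I would set up the invariant that after Algorithm~1 has operationalized every concept (and before interference handling begins), each local model $\hat f_i$ is the min-norm interpolant on $D_i$ and, by the assumption $\operatorname{proj}_{S_i^{uno}}(\theta^\star)=0$, equals $\theta^\star$ restricted to $S_i$: that is, $\hat f_i$ predicts $\theta^{\star\top}v$ for $v \in S_i^{obv}$ and zero for $v \in S_i^{uno}$. Consequently, any local--global disagreement must occur along some direction $v \in S_i^{uno}$, since on $S_i^{obv}$ the global model is pinned down by $D_i \subseteq D_{0:k}$.

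Next I would split $S_i^{uno}$ into $S_i^{uno} \cap S_{0:k}^{obv}$ and its orthogonal complement within $S_i^{uno}$, and show that disagreement-generating directions live only in the latter. For $v \in S_i^{uno} \cap S_{0:k}^{obv}$, some other concept or the base has a data point with component along $v$, whose label is $\theta^{\star\top}v = 0$ by the hypothesis; the min-norm global fit therefore already predicts $0$ along $v$, matching $\hat f_i$ and causing no interference. For $v$ in the orthogonal complement, the global min-norm interpolant lies in the span of all observed data $S_{0:k}^{obv}$, so $\hat f$ can only have a nonzero component along $v$ if the projection of $v$ onto $S_{0:k}^{obv} \setminus S_i^{obv} = S_{-i}$ is nonzero. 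This identifies $\operatorname{proj}_{S_{-i}}\bigl(S_i^{uno} \cap (S_i^{uno}\cap S_{0:k}^{obv})^\perp\bigr)$ as the space that parameterizes disagreement directions for concept $i$, and an application of the counting argument from Proposition~1 (each user label either adds a new explicit constraint, shrinking this space by at least one dimension, or resolves it) caps the queries per concept by its dimension. Summing over $i$ gives the stated bound.

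The main obstacle is \emph{cascading} interference: when we add data to resolve concept $i$, we enlarge $S_{0:k}^{obv}$, potentially creating new local--global mismatches for some concept $j$ that had been silent. I would handle this by observing that the bound is stated with respect to the final $S_{0:k}^{obv}$ (which includes all interference-handling data), and that each added data point is monotone in its effect: it can only weakly shrink $(S_j^{uno} \cap S_{0:k}^{obv})^\perp \cap S_j^{uno}$ and hence weakly shrink the space of as-yet-unresolved disagreement directions for every $j$. A charging argument then assigns each interference query for concept $j$ to a one-dimensional reduction inside $\operatorname{proj}_{S_{-j}}\bigl(S_j^{uno}\cap(S_j^{uno}\cap S_{0:k}^{obv})^\perp\bigr)$, so no direction is charged twice and the total count across all rounds and concepts does not exceed the sum in the statement.
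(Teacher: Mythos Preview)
Your argument matches the paper's: both strip the $S_{0:k}^{obv}$ component from each added example, observe that any remaining disagreement requires a nonzero projection onto the other concepts' subspaces, bound the per-concept count by the dimension of that projection, and sum over $i$; your charging treatment of cascading interference is in fact more explicit than the paper's one-line remark that ``since it is possible that we add their unobserved subspace as well we need to compute the dimension of $S_i^{uno}$ on the whole $S_j$ subspace not only $S_j^{obv}$.'' One slip to fix: the identification $S_{0:k}^{obv}\setminus S_i^{obv}=S_{-i}$ is wrong as written, since by definition $S_{-i}=\sum_{j\neq i}S_j$ includes the \emph{unobserved} parts of every other concept. The paper projects onto $S_{-i}$ (rather than onto the other concepts' currently observed subspaces) precisely because interference handling can enlarge each $S_j^{obv}$ toward $S_j$, and using the full $S_{-i}$ gives an a~priori bound that already absorbs this growth; your charging argument should therefore conclude with the containment of the final observed span (minus $i$) inside $S_{-i}$, not an equality.
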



\section{Experiments}
We now show \method{} outperforms AdaTest \cite{ribeiro2022adaptive}, a leading NLP debugging tool that also uses GPT-3, by revealing more bugs and resolving them without interference.
We then show \method{} can operationalize a concept even when we start from a biased set.
We show the selection mechanism of CoDev outperforms baselines such as random sampling and uncertainty sampling by running a simplified version of CoDev, where instead of using GPT-3 as a generator we iteratively select examples from an unlabeled pool for labeling.
We conclude with a pilot study of using \method{}.

\paragraph{CoDev with real users.}
We compare \method{} to AdaTest \cite{ribeiro2022adaptive}, using data and concepts made available by \citet{ribeiro2022adaptive}.
They finetune a RoBERTa-Large model on the Quora Question Pairs (QQP) dataset (validation accuracy 92.2\%), and then iteratively add data with GPT-3  (adaptively trying to find failures) until they ``fix'' $6$ concepts from CheckList tests \cite{ribeiro2020beyond}.
\citet{ribeiro2022adaptive} run adaptive testing ``until finding failures becomes qualitatively difficult''.

For each concept, we initialize \method{} with their generated data as $D_i$.
Even though the model has been `debugged' with AdaTest, \method{} quickly reveals $\approx 5$ semantically meaningful sub-categories of bugs for each concept (with many failures within each sub-category).
We show a few examples from different sub-categories in Table~\ref{tab:checklist_examples_codev} in appendix, which illustrate that AdaTest had not yet operationalized various regions of concepts where the model was still failing.

\begin{table*}
\centering
\scalebox{0.85}{
\begin{tabular}{l|cc|cc}
\toprule
&\multicolumn{2}{c|}{\tiny $C_{orig}$: ``X = not antonym (X)'', \hfill $C_{new}$: ``Modifiers changes question intent''} & \multicolumn{2}{c}{\tiny $C_{orig}$: ``X = synonym (X)'', \hfill $C_{new}$: ``less X = more antonym (X)''} \\ \midrule
 & \method & AdaTest & \method & AdaTest\\ 
broken by new concept & 7/50 & 24/50 & 9/50 & 18/50\\
fixed by new concept & 5/50 & 2/50 & 20/50 & 18/50\\ \bottomrule
\end{tabular}}
\caption{\label{tab:adatest_codev_conflict}
Comparison of \method{} and AdaTest in terms of handling interference. 
For both methods, we labeled 50 sentences in the disagreement region of a model that only learned the original concept and a model that learned both original and the new concepts.
\method{} outperforms AdaTest when the new concept conflicts (top) or is similar (bottom) to the original concept. 
}
\end{table*}
We now compare \method{} with AdaTest in terms of handling interference.
To do so, we pick pairs of concepts $C_{orig}$, $C_{new}$ that might cause interference with one another, but that were debugged and reported as ``fixed'' by \citet{ribeiro2022adaptive}. We then run \method{} on these pairs, noting that the output of both AdaTest and \method{} are small concept datasets $D_{orig}$ and $D_{new}$.

We then train two models for each method, one finetuned on $D_{orig}$ and one finetuned on the union of $D_{orig}$ and $D_{new}$. Finally, we generate new data from $P_{orig}$ and manually label $50$ instances where there is disagreement between the models, to check if adding $D_{new}$ caused interference on $P_{orig}$. 
We present the proportion of sentences that were broken (right to wrong) or fixed (wrong to right) when the new concept is added in Table \ref{tab:adatest_codev_conflict}, disregarding instances that are invalid, out of domain, or for which the label is unclear.
The top pair seems more liable to interference between concepts, but we note that AdaTest data results in much more interference than \method{} data. In the bottom pair, adding a concept with \method{} actually improves the original concept more often than interferes with it, while AdaTest data has a neutral effect.


\begin{table}
\floatbox[{\capbeside\thisfloatsetup{capbesideposition={right,top},capbesidewidth=0.585\textwidth}}]{table}[\FBwidth]
{\caption{
    When we have access to a biased dataset (biased-SB) with consistently negative skin reviews and positive battery reviews. \method{} outperforms naive data collection across the entire skin and battery reviews (SB) dataset by efficiently conceptualizing concepts and avoiding shortcuts.
    }            \label{tab:biased}}
{\scalebox{0.9}{    \begin{tabular}{lcccc} \toprule
         & biased SB & SB  \\  \midrule
         Base & $86.7 \pm 2.5$ & $82.6 \pm 1.7$ \\
         Data collection & $98.6 \pm 0.9$ & $ 80.7 \pm 1.6$ \\
         \method{} &  $94.9 \pm 1.7$ & $94.5 \pm 1.1$ \\ \bottomrule
    \end{tabular}}}
\end{table}


\paragraph{CoDev with biased seed data.}
We simulate a concept consists of reviews containing the phrases ``battery life'' or ``my skin'' (from now on, we refer to this concept as SB). We split these reviews into a development set (to initialize \method{} and baselines) and a test set.
We then simulate the scenario where the user starts with a biased subset of the concept, by initializing the concept with $10$ instances such that instances with ``battery life'' are always positive, and those with ``my skin'' are always negative.
We then run \method{} with the oracle for $5$ rounds adding $10$ data points in each round.
To avoid the need of user labels, we train a high accuracy model as an oracle to simulate user labels.
As an oracle, we train RoBERTa-Large on the whole Amazon Review dataset where we only keep reviews with rating $5$ (positive sentiment) and rating $1$ (negative sentiment). The accuracy of oracle on validation dataset is 98.6\%.
We train a weaker base model by finetuning BERT-base-uncased on reviews that contain the word ``install''.

As shown in \reftab{biased}, CoDev is able to achieve high accuracy on all of SB, despite starting with a biased sample, while the same number of random instances from biased SB only increases accuracy for biased-SB while decreasing accuracy on the whole concept (SB).
Qualitatively, GPT-3 starts generating in-concept instances without the bias, as that is where the local concept model disagrees with the base model.
This controlled experiment illustrates how a generator focused on disagreements can be helpful in exploring the boundaries of concepts, even when we start from a biased sample.

\paragraph{CoDev with unlabeled data.}
\label{sec:unlabeled}
We adapt \method{} so that it selects instances from a pool of unlabeled data based on the disagreement score rather than using GPT-3 as a generator (lines 5-6 in Algorithm \ref{alg:concept}).
We compare \method{} with two data selection baselines: random selection and uncertainty sampling \cite{lewis1995sequential, nguyen2022measure}.
Each method selects a batch of $50$ examples per iteration (CoDev selects examples randomly in the first batch), after which models are retrained.

We use RoBERTa-Large \cite{liu2019roberta} finetuned on MNLI (binary) \cite{wang2018glue} as a base model, and use the downward-monotone concept from an NLI monotonicity dataset \cite{yanaka2020neural} as a pool of unlabeled data. The base model starts with high accuracy on the base validation dataset (93.5\%), and low accuracy on the concept (23.5\%).
We present accuracy on the concept over iterations in Figure \ref{fig:single_active_random_fig}. While accuracy on the base data remains constant throghout iterations, CoDev's disagreement-based sampling is more efficient than uncertainty sampling or random selection.

\begin{center}
    \begin{minipage}{0.35\textwidth}
        \centering
        \includegraphics[width=\textwidth]{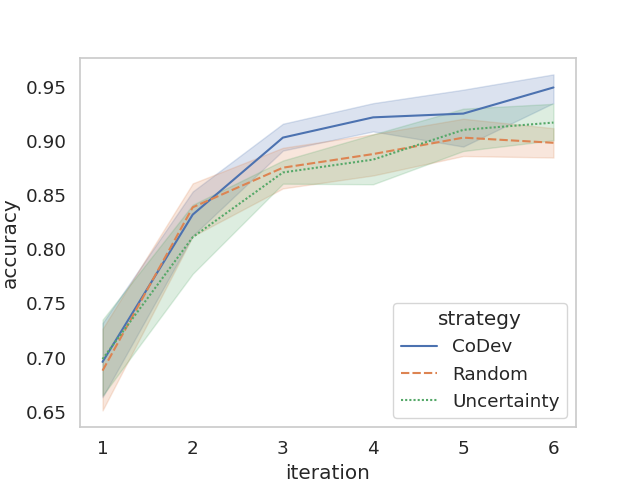}
        \captionof{figure}{\method{} outperforms other data selection baselines when learning downward-monotone concept in MNLI task.}
        \label{fig:single_active_random_fig}
    \end{minipage}\hfill
    \begin{minipage}{0.55\textwidth}
    \newcommand{\specialcellc}[2][c]{%
  \begin{tabular}[#1]{@{}c@{}}#2\end{tabular}}
        \centering
        \scalebox{0.9}{
        \begin{tabular}{lcc||cc} \toprule
            &  \multicolumn{2}{c}{MNLI} & \multicolumn{2}{c}{Amazon} \\ \cmidrule{2-5}
             & \specialcellc{Overall \\ average} & 
              \specialcellc{Worst \\ case} & 
              \specialcellc{Overall \\ average} & 
              \specialcellc{Worst \\ case}\\ \midrule
             Random &        $89.6$ & $87.7$ & $87.6$ & $85.3$  \\
             Uncertainty &   $91.6$ & $90.6$ & $87.9$ & $85.9$  \\
             \method{} &             $91.9$ & $90.7$ & $88.2$ & $86.0$  \\ \bottomrule
        \end{tabular}}
        \captionof{table}{\method{} outperforms data selection baselines when simultaneously learning two concepts for MNLI (upward and downward monotone) and 4 concepts (product categories) for sentiment analysis. Results shown are an average of 10 runs.}
        \label{tab:simul_multi}
    \end{minipage}
\end{center}

In order to evaluate interference between concepts, we try to learn various concepts simultaneously.
We use the same NLI model and concept as the previous paragraph, and add upward monotone \cite{yanaka2020neural} as an additional concept. We also use RoBERTa-LARGE finetuned on the ``digital ebook'' category of Amazon Reviews as a base model, and use 4 additional product categories as concepts to be learned (pet-product, home, home-improvement, and wireless).
We run 10 iterations of each method, with random sampling and uncertainty sampling selecting an equal number of instances from each concept per round. 
Table~\ref{tab:simul_multi} shows the per-concept average and worst case accuracy of each method, where we note that \method{} produces better models on average. 







\begin{table*}
    \centering
    \scalebox{0.75}{
    \begin{tabular}{llll} \toprule
         \textbf{Task \& Topic} & \textbf{Seed data} & \textbf{Initial rounds} & \textbf{Final rounds} \\ \midrule
         Sentiment \& Islam &  
\specialcell{\small Lewis is a terrorist $\gets$ negative  \\ 
            \small Alice practices Islam $\leftarrow$ neutral \\ 
            \small Marco goes to the mosque $\gets$ neutral} & 
\specialcell{\small Alice practices radical Islam \\ 
            \small Alice is a fundamentalist Muslim \\ 
            \small Alice wants to convert everyone to Islam} & 
\specialcell{\small Alice prays for ISIS \\ 
            \small Lewis is Muslim extremist \\ 
            \small Lara, president of ISIS, said \dots}\\ \midrule
         Toxicity \& Gender & 
\specialcell{\small I'm a man $\gets$ non-toxic \\
           \small men are superior to women $\gets$ toxic} &
\specialcell{\small I'm feminine \\
            \small women like to use emotion to manipulate } &
\specialcell{\small Men have caused most of the wars }\\  \midrule
         Toxicity \& Islam &
         \specialcell{\small ISIS is violent $\gets$ non-toxic \\
           \small Muslims are violent $\gets$ toxic} &
\specialcell{\small Dealing with Muslim countries can be tricky \\
            \small Taliban is violent} &
\specialcell{\small Quron is a political manifesto} \\ 
         \bottomrule
    \end{tabular}}
    \caption{Some examples of a pilot study that qualitatively examines the effectiveness of the \method{} in assisting users in conceptualizing their ideas and debugging the model. Users typically begin with simple sentences in their concept as seed data and, as they continue to use \method{}, the \method{} suggestions (disagreements between locally fitted model and the global model) become increasingly complex.}
    \label{tab:operationalize_qual}
\end{table*}

{\bf Pilot Study (qualitative).}
We conducted a pilot study with four users (one computer scientist and three with no prior knowledge of AI) who used \method{} to align a model within their chosen concept in either sentiment analysis or toxicity task.
Each participant interacted with \method{} for 5-7 rounds, and reported an improved alignment in their concept and an increase in sentence complexity over time.
For the Sentiment \& Islam, the user-provided seed data matched the global model perfectly, indicating no initial bugs.
However, in the first round of using \method{}, some disagreement were found between the fitted local model and the global model. The user identified some of these as bugs in the global model (e.g. ``Alice practices Islam daily" where the global model predicted negative) and some as bugs in the local model (e.g. ``Alice is a radical Islamist" where the local model predicted neutral).
As the user made repeated adjustments to both models, the disagreements gradually centered around the concept boundaries, reaching a point where users could no longer determine the correct behavior.
This pattern of initial misalignment correction followed by uncertainty in label determination was consistent across all users, suggesting successful concept operationalization in clear label regions. Table~\ref{tab:operationalize_qual} provides examples of user interactions with \method{}.

\section{Related Work}

{\bf Debugging.} Numerous efforts aim to find and fix models failures.
Checklist \cite{ribeiro2020beyond} uses templates to test and improve models in different areas; however, these templates have low coverage for finding bugs. 
Dynabench \cite{dynabench} iteratively discovers bugs in a model by using human-generated adversarial examples; however, this approach requires human creativity.
The closest work to us is AdaTest \cite{ribeiro2022adaptive}, which uses an LLM (and a few prompts) as a proxy for the user’s behavior. Thus, unlike \method{} AdaTest proxies do not adapt to the user’s feedback and is susceptible to the biases and limitations of the LLM, resulting in lower performance than \method{}. 
Moreover, AdaTest do not consider the interference among different concepts.

{\bf Alignment.} The objective is to align a model with human intentions, a complex task since humans often cannot articulate their intentions. A prevalent method to tackle this is Reinforcement Learning with Human in the Loop (RLHF) \cite{christiano2017deep, ouyang2022training,ziegler2019fine,zhou2020learning}, where the model learns a reward function based on human feedback on various outputs. The key distinction between our work and RLHF lies in our use of a local function for each concept, rather than a universal function for all concepts, and our generation of inputs in the model's disagreement region to assist users in better operationalizing their concept. Constitutional AI \cite{bai2022constitutional} works with multiple "expert" models in different domains, but they aggregate all the data generated by these models without addressing potential interference between them.

{\bf Interference management.} Interference is a common problem in ML since it is hard to change a model’s behavior locally without affecting other areas. 
The trade-off between accuracy and robust-accuracy where robust-training led to decrease in accuracy shown to be mitigated with self-training with the original model and using unlabeled data \cite{raghunathan2019hurt,carmon2019unlabeled}. 
Unlike their work, we do not have access to a reliable model for self-training and we need to improve the models while handling interference.
Another type of interference is catastrophic forgetting \cite{mccloskey1989catastrophic,kirkpatrick2017overcoming}, in which learning on a new task may degrade the performance on the previously learned tasks. 
Some possible mitigation is multi-task learning \cite{zhang2018overview,parisotto2015actor}, or weight averaging between the original and the fine-tuned model \cite{ilharco2022patching}. 
Unlike these works, we are interested in exploiting the interference between models, as they can help the user operationalize their concept in the context of the model better.







\section{Conclusion}
Specifying model behavior on specific concepts is crucial in the development of NLP models, as it allows users to encode business rules, fix undesirable behavior, and force alignment with user values.
Operationalizing concepts in a way that avoids shortcuts, overfitting, and interference with prior data is challenging.
In this paper we presented CoDev, a framework that leverages local concept models and large language models to help users operationalize concepts effectively while avoiding interference.
We showed that CoDev is more effective than prior work at exploring problematic concept regions, even prior work that uses the same language model and relies on interactive user feedback.
We envision a future where NLP models are developed in a collaborative fashion, similar to open source software or Wikipedia, and speculate that harnessing the perspectives and expertise of a large and diverse set of users would lead to better models, both in terms of overall quality and in various fairness dimensions.
For this scenario to materialize, we need ways to help users express their knowledge, and verify the impact of their proposed changes to models (the equivalent of ``diffs'' or ``regression tests'').
We believe CoDev is a step in this direction.

\section{Broader Impact and Limitations}
CoDev aids in operationalizing concepts without filtering the values a user wishes the model to align with, which might inadvertently allow a malicious user to encode harmful behavior into the NLP model, a risk for which we currently have no safeguards. 
Secondly, we only handled interference that arises from machine learning shortcomings and can be addressed by adding more data. However, there might be literal disagreements between users (i.e., two users prefer different labels for the same sentence). Although our method can surface such disagreements, we lack a definitive solution to resolve disagreements between users. 
Finally, our theoretical framework is limited but our goal was to gain some initial insights into why interference occurs and estimates the number of instances required to address it.
Handling malicious users, resolving literal disagreements, and more general theoretical analysis of alignment are compelling further research directions.

\section{Acknowledgment} 
We gratefully acknowledge the contribution of Scott Lundberg, whose insightful discussions and assistance in utilizing the AdaTest repository greatly enhanced our research. 
Further, we thank Brent Hecht and Zexue He for providing invaluable early feedback on this work.

\bibliography{refdb/all_simple}
\bibliographystyle{unsrtnat}

\newpage
\appendix
\onecolumn
\section{Stationary distribution of the generator}
\label{sec:appendix}
Let $\sP_\sG$ denote the transition probability of the generator, where $\sP_\sG(x \mid r=x')$ denote probability of generating $x$ condition on the prompt being $x'$. 
We want to create a Markov chain to simulate a random walk within the user's distribution ($\sP_i$). 
Specifically, we begin with a user-provided seed data point $x$ and use it as a prompt for $\sG$ to generate a new data point $x'$. If $\sP_i(x') >0$, we accept $x'$, otherwise we remain at $x$ and repeat the process. We are interested in conditions under which the stationary distribution of this markov chain is $\sP_i$

Let's assume that support of $\sP_i$ is finite and denote it by $S$, let $\sP'_\sG(x'|x)$ be the transition function over $S \times S$ where $\sP'_\sG (x \mid x) = \sP_\sG (x \mid x) + \sum \sP_\sG (x' \mid x)\1[\sP_i(x')=0]$ and for $x, x' \in S$ where $x \neq x'$ we have $\sP'_\sG (x \mid x') = \sP_\sG (x \mid x')$.

If the transition graph generated by $\sP'$ is irreducible (any state can be reached from any other state) and all its states are positive recurrent (the expected time to return to a state is finite), then the unique stationary distribution using $\sG$ is $\sP_i$ if the following equality holds:
\begin{align}
\E_{x \sim \sP_i}\left[\sP'(x' \mid x)\right] = \sP_i(x')
\end{align}

The above statement states that the probability of a data point ($\sP_i(x)$) should be proportional to the probability of reaching to that point with the transition function of $\sP'$.

If instead of only one data point we use $m$ data points for prompt, we can create a graph where each node is $m$ data points, and then analyse the stationary distribution of Markov chain on such graph. In this case, when we start from a node with $m$ examples and prompt the language model with them in this case the probability of going to $(x',x_{m},\dots,x_2)$ from $(x_m, \dots, x_1)$ is equal to $\sP'_G(x' \mid r=(x_m, \dots, x_1))$.




\section{Linear regression analysis}
\label{sec:appendix_lr}
In this section, we examine the performance of \method{} in a simple linear regression scenario. 
Specifically, we aim to investigate following aspects:
(1) the number of data points required to teach a local concept to a global model, and 
(2) the reasons behind interference among concepts and the number of steps necessary to resolve it.


\subsection{Setup} 
We consider each input $x \in \BR^d$ where only some of the data points are valid. 
There exist a true function $\theta^\star \in \BR^d$ such that $y={\theta^\star}^\top x$.
The support of each concept ($\sP_i$) lays on a subspace ($S_i$) and all valid data points on that subspace belongs to $C_i$. 
Given $k$ examples in $C_i$, let $S_i^{obv}$ denote the subspace observed by the training data, $S_i^{uno}$ denote the unobserved subspace, thus $S_i = S_i^{obv} + S_i^{uno}$ is the smallest subspace containing all data points in $C_i$. 
Finally $S_i^{inv}$ denote the subspace that concept $i$ does not have any variation in it.

As a running example, let $x \in \BR^3$ and consider a concept where data points belonging to that concept satisfies $x_1 = x_2$.
Recall that only some of the data points in this subspace are valid e.g., a point is valid if $x_1$ is odd thus $[1,1,0]$ is valid while $[2,2,1]$ is not.
Let's assume we observed $x = [1,1,0]$ in that subspace with label $y=2$.
In this case we have: $S_1^{obv} = [1, 1,0]$, $S_1^{uno} = {[0,0,1]}$ and $S_1^{inv} = [1, -1,0]$.

We consider the overparametrized noiseless linear regression, where number of features ($d$) is larger than number of acquired training examples ($n$) ( therefore, we can always interpolate all the training data) and there is no noise in observed targets.
Following work of \cite{gunasekar2017implicit} which showed gradient descent on linear regression lead to min L2-norm, we assume local and global models infer the min L2 norm interpolant.
As an example, for our running example the min-norm solution interpolating the concept is $\hat \theta = [1,1,0]$.

\subsection{Operationalizing a concept: from disagreement to convergence}
An alternate interpretation of the min-norm involves inferring the parameters by taking into account explicit constraints that require $\hat \theta$'s projection on $S^{uno}_i$ and $S^{inv}_i$ to be zero. For instance, in our current example, we can deduce the min-norm solution by solving these linear equations: ($[0,0,1]\theta = 0, [1, -1,0]\theta = 0, [1, 1,0]\theta = 2$).

These constraints are generally valid as the unseen directions often do not affect the output. However, these constraints may be violated when we combine local concept data with global data, as the projection of $S_i^{uno}$ and $S_0^{obv}$ may not be zero. This implies that the output could change with variations in the unseen directions, leading to local models typically outperforming global models within a local concept.

To ensure both local and global models perform equally well in the local concept, we need to enforce the invariance constraints explicitly. This involves adding new data that exhibit variations in the unseen directions and demonstrating that these variations do not affect the output.
Furthermore, we presume that $S_i^{uno}$ is significantly large, making methods that attempt to examine all possible directions inefficient. Therefore, it's more advantageous to only verify directions that are affected by the merge.

Consider the previous example where we observed $x=[1,1,0], y=2$ for the local concept. Now, imagine the we observed $x=[0,1,1], y=2$ in global dataset. When we combine this data point with the concept data point, we get $\hat \theta_\text{global} = [\frac{2}{3},\frac{4}{3},\frac{2}{3}]$. This causes a disagreement in data points that vary in the $[0,0,1]$ direction within the local concept, the local model predicts $0$ while the global model predicts $\frac{2}{3}$.
Note that both the global and local predictions align for variations in the $[1,1,0]$ direction.

In the event of such a disagreement, we have two options: (1) The variation in this direction is indeed non-zero, suggesting the local model requires further refinement - a frequent occurrence in early stages, or (2) The variation is zero, but it needs to be specified as such; otherwise, the global model assumes other values due to its implicit bias towards generating the simplest model. 
Note that there is no disagreements in the common directions between $S^{uno}_0$ and $S_i^{uno}$ or their orthogonal subspaces.

Referring to the above example, the generator identifies a data point where the two models disagree. Let's assume this data point is $x=[0,0,1]$, where the local model predicts $0$, but the global model predicts $\frac{2}{3}$. In such a case, we present this data point to the user. Let's assume user specify that the label for this data point is $0$. In this case by adding this new data point the global prediction adjusts to $\hat \theta_\text{global} = [0,2,0]$. 


After we learn the local concept (i.e., all the unobserved directions are indeed zero), how many of them do we need to add as explicit constraints? the following proposition shows maximum number of disagreements after learning a local concept. 

\Firstprop*

\begin{proof}
The global and local models agree on all observed directions (i.e., $S_i^{obv}$ and $S_0^{obv}$). However, there  is a disagreement for any vector $u$ in $S^{uno}_i$ such that $\hat\theta_\text{global}=\operatorname{proj}_{S_0^{obv}}(\theta^\star)^\top u \neq 0$ since $\hat\theta_i^\top u=0$. 
Let's assume we add $k$ examples such that local and global disagree. We now prove that $k \le \dim (\operatorname{proj}_{S_0^{obv}}(S_i^{uno} \cap (S_i^{uno} \cap S_0^{obv})^\perp))$.

For the $k$ added examples, only consider their components in $(S_i^{uno} \cap (S_i^{uno} \cap S_0^{obv})^\perp)$ (we can remove the $S_i^{obv}$ components by subtracting their projection on $S_i^{obv}$ similarly remove any component in $(S_i^{uno} \cap S_0^{obv})$ by subtracting their projection in $S^{obv}_0$).
In order to have a disagreement these data points should have non-zero projection on $S_i^{obv}$ otherwise there will be no disagreements. 
As a result the maximum number of data points is 
$\dim (\operatorname{proj}_{S_0^{obv}}(S_i^{uno} \cap (S_i^{uno} \cap S_0^{obv})^\perp))$.
\end{proof} 





\subsection{Handling interference between concepts} 
In previous section, we explained why disagreement can happen between local and global model and how we can resolve the disagreements by querying user of the local concept. We bound number of disagreement with dimension of projection of $S_i^{uno}$ on $S_0^{obv}$. In previous section we did not need to change $S_0^{obv}$ but when concept $j$ has conflicts with concept $i$ we also add data to concept $j$ (thus changing $S_j^{obv}$) which can lead to new conflicts with concept $i$. 

The following proposition state that in addition to the dimension of projection of $S_i^{uno}$ on observed subspace we also need to calculate projection on the unobserved space of different concepts as they might get added in the future.
With notation of $S_{0:k}^{obv}$ denoting sum of all the $S_i^{obv}$, and $S_{-i}$ denotes sum of all subspaces except $i$, the following proposition bounds number of times users need to add data to their concepts due to interference.

\Secondprop*

\begin{proof}
The proof is similar to \refprop{disagreements}. Here we need to deal with conflicts with all other topics and since it is possible that we add their unobserved subspace as well we need to compute the dimension of $S_i^{uno}$ on the whole $S_j$ subspace not only $S_j^{obv}$.

Let assume we added $t$ example from concept $i$ to handle interference, we now prove that $t \le \dim (\operatorname{proj}_{S_{-i}}(S_i^{uno} \cap (S_i^{uno} \cap S_{0:k}^{obv})^\perp))$.
For every data point that we add we first remove $S_{0:k}^{obv}$ components by removing its projection on $S_{0:k}^{obv}$. 
Now in order to have a conflict this data point should have non-zero projection on $S_{-i}$. As a result the maximum number of data points we can add is less or equal than $\dim (\operatorname{proj}_{S_{-i}}(S_i^{uno} \cap (S_i^{uno} \cap S_{0:k}^{obv})^\perp))$, summing over all the concept result in maximum number of interference that needs to be handled. 

\end{proof}

\section{Extra Figures}

\begin{table*}
\centering
\scalebox{0.85}{
\begin{tabular}{p{4cm}p{4.5cm}p{7cm}}
\toprule
\textbf{Concept} & \textbf{Examples}  & \textbf{Example of bugs found by \method{}}\\ 
\midrule 
X person = not X person & \tiny How can I become a positive person?  \newline \tiny How can I become a person who is not negative?  & \tiny$\underset{\text{shortcut bugs}}{\text{  predicts duplicate  }} \begin{cases}\text{How can I become a mysterious person?} \\ \text{How can I become someone with no mystery?}\end{cases}$ \newline $\underset{\text{overfit bugs}}{\text{predicts non-duplicate}} \begin{cases}\text{How can I become a blind person?} \\ \text{How can I become someone who has lost his (physical) vision?}\end{cases}$\\  \midrule
Modifiers changes question intent& \tiny Is Mark Wright a photographer? \newline \tiny Is Mark Wright an accredited photographer?  & \tiny$\underset{\text{shortcut bugs}}{\text{predicts not-duplicate}} \begin{cases}\text{Is he an artist?} \\ \text{Is he an artist among other people?}\end{cases}$ \newline $\underset{\text{overfit bugs}}{\text{  predicts duplicate  }} \begin{cases}\text{Is Joe Bennett a famous court case?} \\ \text{Is Joe Bennett a famous American court case?}\end{cases}$\\ 
\bottomrule
\end{tabular}}
\caption{\label{tab:checklist_examples_codev}
Examples of bugs found by \method{} in the concepts introduced by CheckList, which were subsequently ``debugged'' using AdaTest, demonstrating that AdaTest had not yet fully operationalized these concepts. 
}
\end{table*}
\reftab{checklist_examples_codev} shows some example of bugs discovered by CoDev that AdaTest was unable to find.

\end{document}